\pgfplotsset{compat=1.14}
\newcolumntype{R}{>{\raggedleft\arraybackslash}X}
\newcolumntype{L}{>{\raggedright\arraybackslash}X}
\newcolumntype{C}{>{\centering\arraybackslash}X}
\newcolumntype{A}{>{\columncolor{gray!25}}C}
\newcolumntype{a}{>{\columncolor{gray!25}}c}
\newcolumntype{.}{D{.}{.}{-1}}
\renewcommand\p@subfigure{\arabic{figure}.}
\renewcommand\p@subtable{A.\arabic{table}.}
\setlist[itemize]{leftmargin=3\parindent}
\setlist[enumerate]{leftmargin=2\parindent}
\theoremstyle{plain}
\newtheorem{lemma}{Lemma}
\newtheorem{theorem}{Theorem}
\theoremstyle{definition}
\newtheorem{axiom}{Axiom}
\newtheorem{definition}{Definition}
\newtheorem{example}{Example}
\theoremstyle{remark}
\newtheorem{remark}{Remark}
\def\keywords{\vspace{.5em} 
{\noindent \textit{Keywords}:\,}}
\def\JEL{\vspace{.5em} 
{\noindent \textbf{\emph{JEL} classification number}:\,}}
\def\AMS{\vspace{.5em} 
{\noindent \textbf{\emph{MSC} classes}:\,}}
\author{L\'aszl\'o Csat\'o\thanks{~e-mail: laszlo.csato@uni-corvinus.hu} }
\affil{Institute for Computer Science and Control, Hungarian Academy of Sciences (MTA SZTAKI) \\
Laboratory on Engineering and Management Intelligence, Research Group of Operations Research and Decision Systems}
\affil{Corvinus University of Budapest (BCE) \\
Department of Operations Research and Actuarial Sciences}
\affil{Budapest, Hungary}
\title{Characterization of an inconsistency ranking for pairwise comparison matrices}
\date{\today}
\begin{document}

\maketitle

\begin{abstract}
Pairwise comparisons between alternatives are a well-known method for measuring preferences of a decision-maker. Since these often do not exhibit consistency, a number of inconsistency indices has been introduced in order to measure the deviation from this ideal case.
We axiomatically characterize the inconsistency ranking induced by the Koczkodaj inconsistency index: six independent properties are presented such that they determine a unique linear order on the set of all pairwise comparison matrices.

\keywords{Pairwise comparisons; Analytic Hierarchy Process (AHP); inconsistency index; axiomatic approach; characterization}

\JEL{C44}

\AMS{90B50, 91B08}
\end{abstract}

\section{Introduction} \label{Sec1}

Pairwise comparisons are a fundamental tool in many decision-analysis methods such as the Analytic Hierarchy Process (AHP) \citep{Saaty1980}.
However, when different entities\footnote{~Throughout paper, the term \emph{entity} is used for the 'things' that are compared. They are sometimes called alternatives, objects, etc.} are compared with regard to abstract, non-measurable criteria by fallible humans, it may happen that the set of comparisons is not consistent: for example, entity $A$ is two times better than entity $B$, entity $B$ is three times better than entity $C$, but entity $A$ is not six ($=2 \times 3$) times better than entity $C$. Inconsistency can also be an inherent feature of the data (see e.g. \citet{BozokiCsatoTemesi2016}).

Therefore, measuring the inconsistency of pairwise comparison matrices -- that is, assigning a numerical value to the question 'How much a given matrix deviates from one representing consistent preferences?' -- emerges as an important issue. According to our knowledge, the first concept of inconsistency was presented by \citet{KendallSmith1940}.
Since then, a large number of inconsistency indices has been proposed such as Saaty's eigenvalue-based index \citep{Saaty1977}, the Koczkodaj inconsistency index \citep{Koczkodaj1993, DuszakKoczkodaj1994}, the relative error \citep{Barzilai1998}, the geometric consistency index \citep{AguaronMoreno-Jimenez2003}, the inconsistency index by Pel\'aez and Lamata \citep{PelaezLamata2003}, or the harmonic consistency index \citep{SteinMizzi2007}.

Deeper understanding of inconsistency indices is not only a mathematical exercise. \citet{BrunelliCanalFedrizzi2013} and \citet{Brunelli2016a} show that the application of various indices may lead to different, even almost opposite conclusions, so choosing among them properly really counts.
It is not by chance that a number of articles deals with the study and comparison of different inconsistency indices (see e.g. \citet{BozokiRapcsak2008} and \citet{BrunelliCanalFedrizzi2013}). Recently, an axiomatic approach has been followed: the introduction and justification of reasonable properties may help to narrow the general definition of inconsistency index as well as highlight those problematic indices that do not satisfy certain requirements \citep{BrunelliFedrizzi2011, Brunelli2016a, Brunelli2017, BrunelliFedrizzi2015, CavalloDApuzzo2012, KoczkodajSzwarc2014, KoczkodajSzybowski2015}.

Nonetheless, while logical consistency (Is the set of properties devoid of logical contradiction? Does there exist any inconsistency index that satisfies all axioms?) and independence (Is there any redundant axiom, implied by some of the others?) are usually discussed by the authors \citep{Brunelli2017, BrunelliFedrizzi2015}, we do not know any result on the \emph{axiomatic characterization} of inconsistency indices.

Characterization means proposing a set of properties such that there exists a \emph{unique} measure satisfying all requirements.
It is an extensively used approach in social choice, illustrated by the huge literature, for instance, on the axiomatization of the Shapley value in game theory (see e.g. \citet{Shapley1953, Dubey1975, Young1985, HartMas-Colell1989, vandenBrink2002}), or on the Hirsch index in scientometrics (see e.g. \citet{Woeginger2008, Quesada2010, Miroiu2013, BouyssouMarchand2014}).
It is not unknown in the case of methods used for deriving weights from pairwise comparison matrices \citep{Fichtner1984, Fichtner1986, Barzilai1997}, too.

The central contribution of this study is a characterization of the so-called Koczkodaj inconsistency ranking, induced by the Koczkodaj inconsistency index. The first axiomatization on this field suggests that inconsistency ranking may be a more natural concept than inconsistency index, since any monotonic transformation of the latter results in the same ranking. In other words, it makes no sense to differentiate between these variants of a given inconsistency index (see also \citet{Brunelli2016b}).

It is worth to note that our approach is somewhat different from the previous ones in this topic, which aimed at finding a set of suitable properties to be satisfied by any reasonable inconsistency measure.
Providing an axiomatic characterization implies neither we accept all properties involved as wholly justified and unquestionable nor we reject the axioms proposed by others.
Therefore, the current paper does not argue against the application of other inconsistency indices. The sole implication of our result is that if one agrees with these -- rather theoretic -- axioms, then the Koczkodaj inconsistency ranking remains the only choice.

Hence we follow characterizations on other fields by assigning modest importance for the motivation behind the properties used and for the issue of inconsistency thresholds.
Nonetheless, the Koczkodaj index seems to be a reasonable inconsistency index as it meets all reasonable axioms suggested by \citet{BrunelliFedrizzi2015, Brunelli2017, KoczkodajSzybowski2015}. In our view, it mitigates the problem caused by the possible hard verification of our properties in practice.
Note also that only one index has been associated to a general level of acceptance. However, if a threshold of acceptable inconsistency is given for the Koczkodaj index, then one can find the minimal number of matrix elements to be modified in order to get an acceptable matrix \citep{BozokiFulopPoesz2015}.

The paper is organized as follows. Section~\ref{Sec2} presents the main notions and notations. Section~\ref{Sec3} introduces the axioms used in the characterization of the Koczkodaj inconsistency ranking offered by Section~\ref{Sec4}. Independence of the axioms is shown in Section~\ref{Sec5}. Finally, Section~\ref{Sec6} discusses our results and highlights some directions of future research.


\section{Preliminaries} \label{Sec2}

Let $\mathbb{R}^{n \times n}_+$ denote the set of positive matrices (with all elements greater than zero) of size $n \times n$.

\begin{definition}
\emph{Pairwise comparison matrix}:
Matrix $\mathbf{A} = \left[ a_{ij} \right] \in \mathbb{R}^{n \times n}_+$ is a \emph{pairwise comparison matrix} if $a_{ji} = 1/a_{ij}$ for all $1 \leq i,j \leq n$.
\end{definition}

In a pairwise comparison matrix, $a_{ij}$ is an assessment of the relative importance of the $i$th entity with respect to the $j$th one.

\begin{definition}
\emph{Pairwise comparison submatrix}: 
Let $2 \leq m < n$. Matrix $\mathbf{B} = \left[ b_{ij} \right] \in \mathbb{R}^{m \times m}_+$ is a \emph{pairwise comparison submatrix} of pairwise comparison matrix $\mathbf{A} = \left[ a_{ij} \right] \in \mathbb{R}^{n \times n}_+$ if there exists an injection $\sigma: \{ 1;\,2;\dots;\,m \} \to \{ 1;\,2;\dots;\,n \}$ such that $b_{ij} = a_{\sigma(i) \sigma(j)}$ for all $1 \leq i,j \leq m$.
This relation is denoted by $\mathbf{B} \subset \mathbf{A}$.
\end{definition}

\begin{definition}
\emph{Triad}:
Pairwise comparison submatrix $\mathbf{B} \in \mathbb{R}^{m \times m}_+$ of pairwise comparison matrix $\mathbf{A} \in \mathbb{R}^{n \times n}_+$ is a \emph{triad} of matrix $\mathbf{A}$ if $m=3$.
\end{definition}

The term triad will be analogously used for a pairwise comparison matrix of size $3 \times 3$.

A triad $\mathbf{T}$ can be entirely described by the three elements in its upper triangle: $\mathbf{T} = (t_1;\,t_2;\,t_3)$ such that $t_1$ is the outcome of the comparison between the $1$st and the $2$nd, $t_2$ is the outcome of the comparison between the $1$st and the $3$rd, and $t_3$ is the outcome of the comparison between the $2$nd and the $3$rd entities, respectively.  

\begin{definition}
\emph{Consistency}:
Pairwise comparison matrix $\mathbf{A}  = \left[ a_{ij} \right] \in \mathbb{R}^{n \times n}_+$ is \emph{consistent} if $a_{ik} = a_{ij} a_{jk}$ for all $1 \leq i,j,k \leq n$.
\end{definition}

When the consistency condition does not hold, the pairwise comparison matrix is said to be \emph{inconsistent}.

\begin{remark}
Triad $\mathbf{T} = (t_1;\,t_2;\,t_3)$ is consistent if and only if $t_2 = t_1 t_3$.
\end{remark}

Note that a pairwise comparison matrix is consistent if and only if all of its triads are consistent.

The set of all pairwise comparison matrices is denoted by $\mathcal{A}$. Inconsistency is measured by associating a value for each pairwise comparison matrix.

\begin{definition}
\emph{Inconsistency index}:
Function $I: \mathcal{A} \to \mathbb{R}$ is an \emph{inconsistency index}.
\end{definition}

The paper discusses a specific inconsistency indicator, which is the following.

\begin{definition} \label{Def6}
\emph{Koczkodaj inconsistency index} \citep{Koczkodaj1993, DuszakKoczkodaj1994}:
Consider a pairwise comparison matrix $\mathbf{A} = \left[ a_{ij} \right] \in \mathbb{R}^{n \times n}_+$. Its inconsistency according to the index $I^K$ is
\begin{equation} \label{eq1}
I^K(\mathbf{A}) = \max_{i<j<k} \left( \min \left\{ \left| 1 - \frac{a_{ik}}{a_{ij} a_{jk}} \right|; \, \left| 1 -  \frac{a_{ij} a_{jk}}{a_{ik}} \right| \right\} \right)
\end{equation}
\end{definition}

Note that the Koczkodaj inconsistency index is based on the maximally inconsistent triad such that its inconsistency is measured by its deviation from a consistent triad.

\emph{Inconsistency ranking} $\succeq$ is a linear order (complete, antisymmetric and transitive binary relation) on the set of all pairwise comparison matrices $\mathcal{A}$ with respect to inconsistency.
$\mathbf{A} \succeq \mathbf{B}$ is interpreted such that pairwise comparison matrix $\mathbf{A}$ is at most as inconsistent as matrix $\mathbf{B}$.
The relations $\sim$ and $\succ$ are derived from $\succeq$ in the usual way.

Any inconsistency index induces an inconsistency ranking: $\mathbf{A} \succeq \mathbf{B}$ if and only if matrix $\mathbf{A}$ has a not worse (typically, not larger) value of inconsistency than matrix $\mathbf{B}$ according to the given index.

\begin{definition} \label{Def7}
\emph{Koczkodaj inconsistency ranking}:
Consider pairwise comparison matrices $\mathbf{A} = \left[ a_{ij} \right] \in \mathbb{R}^{n \times n}_+$ and $\mathbf{B} = \left[ b_{ij} \right] \in \mathbb{R}^{m \times m}_+$. Then $\mathbf{A} \succeq^K \mathbf{B}$ if
\begin{equation} \label{eq2}
\max_{i<j<k} \left( \max \left\{ \frac{a_{ij} a_{jk}}{a_{ik}} ; \, \frac{a_{ik}}{a_{ij} a_{jk}} \right\} \right) \leq \max_{i<j<k} \left( \max \left\{ \frac{b_{ij} b_{jk}}{b_{ik}} ; \, \frac{b_{ik}}{b_{ij} b_{jk}} \right\} \right).
\end{equation}
\end{definition}

The Koczkodaj inconsistency ranking is the inconsistency ranking induced by the Koczkodaj inconsistency index since, for any triad $\mathbf{T} = (t_1; t_2; t_3)$, there exists a differentiable one-to-one correspondence between the Koczkodaj inconsistency index and the inconsistency index $I^T(\mathbf{T}) = \max \left\{ (t_1 t_3) / t_2 ; \, t_2 / (t_1 t_3) \right\}$ \citep[Theorem ~3.1]{BozokiRapcsak2008}.

\section{Axioms for an inconsistency ranking} \label{Sec3}

We start by giving some properties that one could expect an inconsistency ranking to satisfy. After that, they will be commented.

\begin{axiom}
\emph{Positive responsiveness} ($PR$):
Consider two triads $\mathbf{S} = (1; \,s_2; \,1)$ and $\mathbf{T} = (1; \,t_2; \,1)$ such that $s_2,t_2 \geq 1$.
Inconsistency ranking $\succeq$ satisfies $PR$ if $\mathbf{S} \succeq \mathbf{T} \iff s_2 \leq t_2$.
\end{axiom}

\begin{axiom}
\emph{Invariance under inversion of preferences} ($IIP$):
Consider a triad $\mathbf{T}$ and its transpose $\mathbf{T}^\top$.
Inconsistency ranking $\succeq$ satisfies $IIP$ if $\mathbf{T} \sim \mathbf{T}^\top$.
\end{axiom}

\begin{axiom}
\emph{Homogeneous treatment of entities} ($HTE$):
Consider the triads $\mathbf{T} = (1; \,t_2; \,t_3)$ and $\mathbf{T'} = (1; \,t_2 / t_3; \,1)$.
Inconsistency ranking $\succeq$ satisfies $HTE$ if $\mathbf{T} \sim \mathbf{T'}$.
\end{axiom}

\begin{axiom}
\emph{Scale invariance} ($SI$):
Take the triads $\mathbf{T} = (t_1; \,t_2; \,t_3)$ and $\mathbf{T'} = (kt_1; \,k^2 t_2; \,kt_3)$.
Inconsistency ranking $\succeq$ satisfies $SI$ if $\mathbf{T} \sim \mathbf{T'}$ for all $k>0$.
\end{axiom}

\begin{axiom}
\emph{Monotonicity} ($MON$):
Consider a pairwise comparison matrix $\mathbf{A}$ and its triad $\mathbf{T}$. 
Inconsistency ranking $\succeq$ satisfies $MON$ if $\mathbf{A} \preceq \mathbf{T}$.
\end{axiom}

\begin{axiom}
\emph{Reducibility} ($RED$):
Consider a pairwise comparison matrix $\mathbf{A}$.
Inconsistency ranking $\succeq$ satisfies $RED$ if $\mathbf{A}$ has a triad $\mathbf{T}$ such that $\mathbf{A} \sim \mathbf{T}$.
\end{axiom}

Positive responsiveness is based on the observation that triad $\mathbf{T}$ differs more from the basic consistent triad of $(1; \,1; \,1)$ than triad $\mathbf{S}$. It is a similar, but more relaxed requirement than monotonicity on single comparisons, proposed by \citet{BrunelliFedrizzi2015}: $PR$ contains only one implication instead of two, and the condition is demanded for triads, not for all pairwise comparison matrices.\footnote{~The term 'monotonicity' is not used in the name of this property in order to avoid confusion with $MON$.}

Invariance under inversion of preferences means that inverting all the preferences does not affect inconsistency. This axiom was introduced in \citet{Brunelli2017} for all pairwise comparison matrices.
A fundamental reason for this requirement is that $a_{ij}$ is an assessment of the relative importance of the $i$th entity with respect to the $j$th one. However, it may equivalently reflect the relative importance of the $j$th entity with respect to the $i$th one. The two values are logically reciprocal, but one gets the pairwise comparison matrix $\mathbf{A}^\top$ instead of $\mathbf{A}$ \citep{BozokiRapcsak2008}.
Therefore, triads $\mathbf{T}$ and $\mathbf{T}^\top$ should be in the same equivalence class of inconsistency.

Homogeneous treatment of entities can be justified in the following way. According to triads $\mathbf{T}$ and $\mathbf{T'}$, we have two entities (the 1st and the 2nd) equally important on their own. After they are compared with a third entity, the inconsistency of the resulting triad should not depend on the relative importance of the new entity. The violation of $HTE$ would mean that inconsistency is influenced by the absolute importance of entities, contradicting to the notion of pairwise comparisons.

Scale invariance means that the level of inconsistency is independent from the mathematical representation of the preferences. For example, take the $i$th, the $j$th and the $k$th entities such that the $i$th is 'moderately more important' than $j$th and $j$th is 'moderately more important' than the $k$th with the same intensity of the two relations. Then it makes sense to expect the inconsistency ranking to be the same if 'moderately more important' is identified by the number $2$, or $3$, or $4$, and so on, even allowing for a change in the direction of the two preferences.
Naturally, this transformation should preserve consistency, which finalizes the definition of $SI$.

Monotonicity is formulated on the idea that 'no pairwise comparison submatrix may have a worse inconsistency indicator than the given pairwise comparison matrix' by Jacek Szybowski in \citet{KoczkodajSzybowski2015}. Hence an extension of the set of entities compared cannot improve on inconsistency.
For our characterization, it is enough to expect that no triad of the original matrix is allowed to be more inconsistent.

Reducibility ensures that there exists a 'critical' triad whose inconsistency is responsible for the inconsistency of the original pairwise comparison matrix.
It will be a crucial axiom in the main result, simplifying the problem by conmtracting its size. Readers familiar with characterizations in other fields of science will not be surprised: for example, the reduced game property is widely used in axiomatizations of different game theory concepts (see e.g. \citet{DavisMaschler1965}).

Note that $MON$ and $RED$ together imply monotonicity not only for triads, but for all pairwise comparison submatrices, similarly to the definition of monotonicity by \citet{KoczkodajSzybowski2015}.

\section{Characterization of the Koczkodaj inconsistency ranking} \label{Sec4}

\begin{theorem} \label{Theo1}
The Koczkodaj inconsistency ranking is the unique inconsistency ranking satisfying positive responsiveness, invariance under inversion of preferences, homogeneous treatment of entities, scale invariance, monotonicity and reducibility.
\end{theorem}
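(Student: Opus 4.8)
The plan is to prove the two directions of the equivalence separately. Write $J(\mathbf{A})$ for the left-hand side of \eqref{eq2}, so that $\mathbf{A} \succeq^K \mathbf{B}$ holds exactly when $J(\mathbf{A}) \le J(\mathbf{B})$, and note that $J(\mathbf{A}) = \max_{\mathbf{T} \subset \mathbf{A}} I^T(\mathbf{T})$, where $I^T(\mathbf{T}) = \max\{ t_2/(t_1 t_3);\, (t_1 t_3)/t_2 \}$ is the triad index. \emph{Existence} reduces to substitution into this formula: $PR$ holds because $I^T(1;s_2;1) = s_2$ whenever $s_2 \ge 1$; $IIP$ and $SI$ hold because $(t_1 t_3)/t_2$ is unchanged under $t_i \mapsto 1/t_i$ and under $(t_1,t_2,t_3) \mapsto (kt_1, k^2 t_2, kt_3)$; $HTE$ holds because $I^T(1;t_2;t_3) = \max\{t_3/t_2;\,t_2/t_3\} = I^T(1;t_2/t_3;1)$; $MON$ holds since a maximum over triads is never below any single term; and $RED$ holds because that maximum is attained at some triad $\mathbf{T}^\ast \subset \mathbf{A}$, for which $J(\mathbf{T}^\ast) = I^T(\mathbf{T}^\ast) = J(\mathbf{A})$, i.e. $\mathbf{A}$ and $\mathbf{T}^\ast$ are equivalent under $\succeq^K$.

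\emph{Uniqueness.} Let $\succeq$ be any inconsistency ranking satisfying the six axioms; I would show it coincides with $\succeq^K$ in two stages. \emph{Stage 1 (triads).} For an arbitrary triad $\mathbf{T} = (t_1;t_2;t_3)$, apply $SI$ with $k = 1/t_1$ to get $\mathbf{T} \sim (1;\, t_2/t_1^2;\, t_3/t_1)$, then $HTE$ to get $\sim (1;\, t_2/(t_1 t_3);\, 1)$, and finally $IIP$ (using $(1;v;1)^\top = (1;1/v;1)$) when the middle entry $v = t_2/(t_1 t_3)$ is below $1$. In every case $\mathbf{T} \sim (1;\, I^T(\mathbf{T});\, 1)$ with $I^T(\mathbf{T}) = \max\{v;\,1/v\} \ge 1$. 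Hence, for any two triads $\mathbf{S}, \mathbf{T}$, transitivity together with $PR$ gives $\mathbf{S} \succeq \mathbf{T} \iff (1;I^T(\mathbf{S});1) \succeq (1;I^T(\mathbf{T});1) \iff I^T(\mathbf{S}) \le I^T(\mathbf{T}) \iff \mathbf{S} \succeq^K \mathbf{T}$, so $\succeq$ and $\succeq^K$ agree on triads.

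\emph{Stage 2 (general matrices).} Fix a matrix $\mathbf{A}$. By $RED$ there is a triad $\mathbf{T}_\mathbf{A} \subset \mathbf{A}$ with $\mathbf{A} \sim \mathbf{T}_\mathbf{A}$; by $MON$, $\mathbf{A} \preceq \mathbf{T}'$ for every triad $\mathbf{T}' \subset \mathbf{A}$, hence $\mathbf{T}_\mathbf{A} \preceq \mathbf{T}'$, which by Stage 1 means $I^T(\mathbf{T}_\mathbf{A}) \ge I^T(\mathbf{T}')$ for all $\mathbf{T}' \subset \mathbf{A}$. Since $\mathbf{T}_\mathbf{A}$ is itself such a triad, $I^T(\mathbf{T}_\mathbf{A}) = \max_{\mathbf{T}' \subset \mathbf{A}} I^T(\mathbf{T}') = J(\mathbf{A})$. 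Therefore, for arbitrary matrices $\mathbf{A}$ and $\mathbf{B}$, transitivity together with $\mathbf{A} \sim \mathbf{T}_\mathbf{A}$, $\mathbf{B} \sim \mathbf{T}_\mathbf{B}$ and Stage 1 yields $\mathbf{A} \succeq \mathbf{B} \iff \mathbf{T}_\mathbf{A} \succeq \mathbf{T}_\mathbf{B} \iff I^T(\mathbf{T}_\mathbf{A}) \le I^T(\mathbf{T}_\mathbf{B}) \iff J(\mathbf{A}) \le J(\mathbf{B}) \iff \mathbf{A} \succeq^K \mathbf{B}$, which finishes the argument (matrices of order three being already covered by Stage 1).

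The main obstacle I anticipate is Stage 1 of uniqueness: arranging the composition of $SI$, $HTE$ and $IIP$ in a legitimate order so that every triad is driven onto the one-parameter family $\{(1;s;1): s \ge 1\}$ on which $PR$ imposes a total order. A second, subtler point lies in Stage 2, where $RED$ only supplies \emph{some} equivalent triad; it is the interaction with $MON$ that forces this triad to be a maximally inconsistent one, and this is exactly what makes the value $J(\mathbf{A})$ reappear on the left of \eqref{eq2}. By comparison, the existence direction and the bookkeeping with completeness and transitivity are routine.
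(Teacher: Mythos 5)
Your proof is correct and follows essentially the same route as the paper: verify the axioms by substitution, then for uniqueness use $SI$, $HTE$ and $IIP$ to drive any triad onto the family $(1;s;1)$ with $s\ge 1$ where $PR$ totally orders it, and use $RED$ together with $MON$ to force the equivalent triad supplied by $RED$ to be a maximally inconsistent one. Your two-stage organization merely makes explicit the final step that the paper states somewhat tersely, namely that $I^T(\mathbf{T}_\mathbf{A})=\max_{\mathbf{T}'\subset\mathbf{A}}I^T(\mathbf{T}')$, so no substantive difference or gap remains.
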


\begin{proof}
We will first argue that the Koczkodaj inconsistency ranking satisfies all axioms.
$PR$, $HTE$ and $SI$ immediately follows from \eqref{eq2}.
$IIP$ is met due to taking the maximum of $a_{ij} a_{jk} / a_{ik}$ and its inverse in \eqref{eq2}.
The Koczkodaj inconsistency ranking considers only the maximally inconsistent triad, and the set of triads of a pairwise comparison submatrix is a subset of the set of triads of the original matrix, therefore it meets $MON$ and $RED$.

Uniqueness: it is shown that any inconsistency ranking satisfying the six axioms coincides with the Koczkodaj inconsistency ranking.

Assume that there exist pairwise comparison matrices $\mathbf{A}$ and $\mathbf{B}$ such that $\mathbf{A} \succeq \mathbf{B}$ according to an inconsistency ranking $\succeq$, which meets $PR$, $IIP$, $HTE$, $SI$, $MON$ and $RED$. 
The idea is to gradually simplify -- with the use of the axioms -- the comparison of inconsistencies of pairwise comparison matrices $\mathbf{A}$ and $\mathbf{B}$ by considering matrices with the same inconsistency level until their ranking depends on a single mathematical relation.
\begin{enumerate}
\item
Due to $RED$, there exist triads $\mathbf{A_1} = (a_1; \,a_2; \,a_3)$ and $\mathbf{B_1} = (b_1; \,b_2; \,b_3)$ such that $\mathbf{A_1} \sim \mathbf{A}$ and $\mathbf{B_1} \sim \mathbf{B}$.
\item
$MON$ provides that $\mathbf{A_1} \preceq \mathbf{A'}$ for all triads $\mathbf{A'} \subset \mathbf{A}$ and $\mathbf{B_1} \preceq \mathbf{B'}$ for all triads $\mathbf{B'} \subset \mathbf{B}$, respectively.
\item
Consider $\mathbf{A_2} = (1; \,a_2 / a_1^2; \,a_3 / a_1)$ and $\mathbf{B_2} = (1; \,b_2 / b_1^2; \,b_3 / b_1)$. $SI$ implies that $\mathbf{A_2} \sim \mathbf{A_1}$ and $\mathbf{B_2} \sim \mathbf{B_1}$.
\item
Consider $\mathbf{A_3} = \left(1; \,a_2 / (a_1 a_3); \,1 \right)$ and $\mathbf{B_3} = \left( 1; \,b_2 / (b_1 b_3); \,1 \right)$. $HTE$ results in $\mathbf{A_3} \sim \mathbf{A_2}$ and $\mathbf{B_3} \sim \mathbf{B_2}$.
\item
$IIP$ provides that $a_2 / (a_1 a_3) \geq 1$ and $b_2 / (b_1 b_3) \geq 1$ can be assumed without loss of generality in $\mathbf{A_3}$ and $\mathbf{B_3}$, respectively.
\item
To summarize, $\mathbf{A_3} \sim \mathbf{A_2} \sim \mathbf{A_1} \sim \mathbf{A} \succeq \mathbf{B} \sim \mathbf{B_1} \sim \mathbf{B_2} \sim \mathbf{B_3}$. It leads to $1 \leq a_2 / (a_1 a_3) \leq b_2 / (b_1 b_3)$ because of the property $PR$.
\end{enumerate}
However, $1 \leq a_2 / (a_1 a_3) \leq b_2 / (b_1 b_3)$ means that $\mathbf{A_1} \succeq^K \mathbf{B_1}$ due to Definition~\ref{Def6}. Analogously, since $\mathbf{A_1} \preceq \mathbf{A'}$ for all triads $\mathbf{A'} \subset \mathbf{A}$ and $\mathbf{B_1} \preceq \mathbf{B'}$ for all triads $\mathbf{B'} \subset \mathbf{B}$, we get $\mathbf{A_1} \preceq^K \mathbf{A'}$ for all triads $\mathbf{A'} \subset \mathbf{A}$ and $\mathbf{B_1} \preceq^K \mathbf{B'}$ for all triads $\mathbf{B'} \subset \mathbf{B}$, respectively. It is equivalent to $\mathbf{A} \sim \mathbf{A_1} \succeq^K \mathbf{B_1} \sim \mathbf{B}$, so $\succeq$ is the Koczkodaj inconsistency ranking.
\end{proof}

The six properties can be classified in at least two ways.
The proof of Theorem~\ref{Theo1} reveals that the first four axioms ($PR$, $IIP$, $HTE$ and $SI$) characterize an inconsistency ranking on the set of triads, while $MON$ and $RED$ are responsible for its extension to pairwise comparison matrices with at least four entities. Therefore, by accepting $MON$ and $RED$, other inconsistency rankings based on the maximally inconsistent triad can be characterized.
From another perspective, two axioms, $PR$ and $MON$ contain a 'preference' relation, whereas $IIP$, $HTE$ and $SI$ define some equivalence classes. $RED$ plays a special role by imposing equivalence between matrices of different size.

The six axioms above ($PR$, $IIP$, $HTE$, $SI$, $MON$ and $RED$) are not enough to uniquely determine the Koczkodaj inconsistency index $I^K$ as any monotonic function of it generates the same inconsistency ranking.
It means that some technical axioms are necessary in order to obtain the Koczkodaj inconsistency index.

\begin{definition}
\emph{Consistency detection} ($CT$):
Consider a consistent triad $\mathbf{T}$.
Inconsistency index $I$ satisfies $CT$ if $I(\mathbf{T}) = 0$.
\end{definition}

$CT$ was suggested by \citet{KoczkodajSzybowski2015} and is a specific version of the property \emph{Existence of a unique element representing consistency} in \citet{BrunelliFedrizzi2015}. Nonetheless, adding $CT$ to the previous properties is still not enough to exclude some transformations of $I^K$. Since it does not make much sense to differentiate between inconsistency indices which generate the same inconsistency ranking \citep{Brunelli2016b}, the axiomatic characterization of the Koczkodaj inconsistency index remains a topic of future research.

\section{Independence of the axioms} \label{Sec5}

According to Theorem~\ref{Theo1}, the six properties determine a unique inconsistency ranking. However, it may turn out that there is some redundancy in the result, certain axioms can be left out. Therefore, six examples, all of them different from the Koczkodaj inconsistency ranking, are given such that they satisfy all but one properties.

\begin{example} \label{Examp1}
Consider two pairwise comparison matrices $\mathbf{A} = \left[ a_{ij} \right] \in \mathbb{R}^{n \times n}_+$ and $\mathbf{B} = \left[ b_{ij} \right] \in \mathbb{R}^{m \times m}_+$. Then $\mathbf{A} \succeq^{1} \mathbf{B}$ if
\begin{equation} \label{eq3}
\min_{i<j<k} \left( \max \left\{ \frac{a_{ij} a_{jk}}{a_{ik}} ; \, \frac{a_{ik}}{a_{ij} a_{jk}} \right\} \right) \geq \min_{i<j<k} \left( \max \left\{ \frac{b_{ij} b_{jk}}{b_{ik}} ; \, \frac{b_{ik}}{b_{ij} b_{jk}} \right\} \right).
\end{equation}
\end{example}

\begin{lemma}
The inconsistency ranking $\succeq^{1}$ in Example~\ref{Examp1} meets $IIP$, $HTE$, $SI$, $MON$ and $RED$, but fails $PR$.
\end{lemma}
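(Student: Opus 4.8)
The plan is to verify each of the six axioms for $\succeq^1$ in turn, noting that $\succeq^1$ differs from $\succeq^K$ only in replacing the outer $\max$ over triads by an outer $\min$, while the inner structure (the symmetric measure $\max\{(t_1 t_3)/t_2,\, t_2/(t_1 t_3)\}$ on each triad) is unchanged. Consequently, every property that depends only on the inner triad-level functional will carry over unchanged, and only the properties that genuinely involve aggregation across several triads of a matrix — or comparisons between matrices of different sizes — need a second look.

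First I would dispose of $PR$, $IIP$, $HTE$ and $SI$ for triads: on a single triad $\mathbf{T}=(t_1;t_2;t_3)$ both the outer $\max$ and the outer $\min$ are vacuous (there is exactly one triple $i<j<k$), so $\succeq^1$ and $\succeq^K$ induce literally the same order on the set of triads. Hence $IIP$, $HTE$ and $SI$ hold for $\succeq^1$ exactly as they do for $\succeq^K$. For $PR$, take $\mathbf{S}=(1;s_2;1)$ and $\mathbf{T}=(1;t_2;1)$ with $s_2,t_2\geq 1$. Here the relevant value on each triad is $\max\{1/s_2,\,s_2\}=s_2$, and similarly $t_2$, so $\mathbf{S}\succeq^1\mathbf{T}\iff s_2\geq t_2$ by \eqref{eq3} — the reverse of what $PR$ demands. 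So I would exhibit a concrete pair, e.g. $s_2=2$, $t_2=3$: then $\mathbf{S}\succ^1\mathbf{T}$ although $s_2\leq t_2$, showing $PR$ fails. (One should double-check $\succeq^1$ is still a genuine linear order, i.e. complete, antisymmetric and transitive — this is immediate since it is induced by a real-valued function on $\mathcal{A}$, just as $\succeq^K$ is.)

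Next, $MON$: given a pairwise comparison matrix $\mathbf{A}$ and a triad $\mathbf{T}\subset\mathbf{A}$, I must show $\mathbf{A}\preceq^1\mathbf{T}$, i.e. that the aggregated value for $\mathbf{A}$ is $\geq$ the value for $\mathbf{T}$. But the value for $\mathbf{A}$ under $\succeq^1$ is the \emph{minimum} over all its triads of the triad-values, and $\mathbf{T}$ is one such triad; a minimum over a set containing $\mathbf{T}$'s value is $\leq$ that value, which is the wrong direction. Here the key point is the convention encoded in \eqref{eq3}: $\mathbf{A}\succeq^1\mathbf{B}$ means the $\min$-aggregate of $\mathbf{A}$ is \emph{larger}, i.e. $\mathbf{A}$ is rated \emph{less} inconsistent when its minimal triad-inconsistency is larger. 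So "$\mathbf{A}\preceq^1\mathbf{T}$" unpacks to "$\min$-aggregate of $\mathbf{A}$ $\leq$ value of $\mathbf{T}$", which does hold since the minimum is taken over a family including $\mathbf{T}$'s value. I would state this carefully, because the direction of the inequality in \eqref{eq3} is exactly where the argument could be mishandled.

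Finally, $RED$: I need a triad $\mathbf{T}\subset\mathbf{A}$ with $\mathbf{A}\sim^1\mathbf{T}$. Take $\mathbf{T}$ to be a triad of $\mathbf{A}$ attaining the minimum in the left-hand side of \eqref{eq3}; then the $\min$-aggregate of $\mathbf{A}$ equals the value of $\mathbf{T}$, so the two sides of \eqref{eq3} coincide in both directions and $\mathbf{A}\sim^1\mathbf{T}$. (For $n=3$ the matrix is its own triad and there is nothing to prove; such a minimizing triad always exists since there are finitely many triads.) I expect the main obstacle to be purely expository rather than mathematical: keeping the orientation of $\succeq^1$ straight throughout, since \eqref{eq3} reverses the inequality relative to \eqref{eq2} precisely so that "larger minimal triad-inconsistency" is read as "less inconsistent", and it is easy to slip a sign when checking $MON$ and $RED$. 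No step requires a genuinely new idea beyond the observation that $\min$ over the triads of a submatrix's triads is $\geq$ $\min$ over the triads of the full matrix, because the former set is a subset of the latter.
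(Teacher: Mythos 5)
Your proposal is correct and follows essentially the same route as the paper's proof: the triad-level axioms ($IIP$, $HTE$, $SI$) transfer because $\succeq^1$ agrees with $\succeq^K$ on single triads, $MON$ and $RED$ follow from the subset relation between the triads of a submatrix and those of the full matrix, and $PR$ fails because the order on triads is exactly reversed; indeed you are more careful than the paper's one-line treatment of $MON$ in tracking the orientation of \eqref{eq3}. One small slip: in your concrete $PR$ counterexample with $s_2=2$, $t_2=3$ you assert $\mathbf{S} \succ^1 \mathbf{T}$, but \eqref{eq3} gives $\mathbf{S} \succeq^1 \mathbf{T} \iff s_2 \geq t_2$, so here $\mathbf{S} \prec^1 \mathbf{T}$ while $PR$ would demand $\mathbf{S} \succ \mathbf{T}$ --- the violation is real, but the displayed relation is flipped (the general equivalence you state just before it already suffices and is correct).
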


\begin{proof}
$IIP$, $HTE$ and $SI$ contain an equivalence relation on the set of triads, and the inconsistency ranking $\succeq^{1}$ shows the same behaviour as the Koczkodaj inconsistency ranking from this point of view.
Inconsistency ranking $\succeq^{1}$ is also based on triads, so it meets $RED$. Monotonicity is satisfied because minimum over a subset cannot be larger than minimum over the original set.
However, it does not satisfy $PR$ since it completely reverses the ranking of triads.
\end{proof}

\begin{example} \label{Examp2}
Consider two pairwise comparison matrices $\mathbf{A} = \left[ a_{ij} \right] \in \mathbb{R}^{n \times n}_+$ and $\mathbf{B} = \left[ b_{ij} \right] \in \mathbb{R}^{m \times m}_+$. Then $\mathbf{A} \succeq^{2} \mathbf{B}$ if
\begin{equation} \label{eq4}
\max_{i<j<k} \frac{a_{ij} a_{jk}}{a_{ik}} \leq \max_{i<j<k} \frac{b_{ij} b_{jk}}{b_{ik}}.
\end{equation}
\end{example}

\begin{lemma}
The inconsistency ranking $\succeq^{2}$ in Example~\ref{Examp2} meets $PR$, $HTE$, $SI$, $MON$ and $RED$, but fails $IIP$.
\end{lemma}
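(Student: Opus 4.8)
The plan is to follow the template already used for $\succeq^{1}$: verify the five properties that $\succeq^{2}$ satisfies, and then exhibit a single pair of triads witnessing the failure of $IIP$. The guiding observation is that $\succeq^{2}$ arises from the Koczkodaj inconsistency ranking \eqref{eq2} by retaining, inside the outer maximum, only one of the two ratios $a_{ik}/(a_{ij}a_{jk})$ and $a_{ij}a_{jk}/a_{ik}$; thus $\succeq^{2}$ keeps the ``maximal-triple'' architecture of $\succeq^{K}$ and differs from it only in that its per-triple value is a one-sided deviation rather than a symmetric one.

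First I would dispose of $PR$, $HTE$ and $SI$ by restricting to the triads named in each axiom and comparing the quantity in \eqref{eq4} with that in \eqref{eq2} on those triads. On $\mathbf{S}=(1;s_2;1)$ with $s_2\geq 1$ the retained ratio equals $s_2=I^{T}(\mathbf{S})$, so $PR$ is inherited from \eqref{eq2}. On $\mathbf{T}=(1;t_2;t_3)$ and $\mathbf{T'}=(1;t_2/t_3;1)$ the retained ratio takes the common value $t_2/t_3$, giving $HTE$. Replacing $(t_1;t_2;t_3)$ by $(kt_1;k^2t_2;kt_3)$ multiplies numerator and denominator of the retained ratio by the same factor $k^2$, so it is unchanged, giving $SI$. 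These are three short computations, which I would not belabour.

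Next, $MON$ and $RED$ go through exactly as for $\succeq^{K}$ and $\succeq^{1}$: the value of \eqref{eq4} on a pairwise comparison matrix is a maximum over all its ordered triples, the value on a triad of that matrix is one particular term of this maximum, so no triad can be strictly more inconsistent than the matrix ($MON$); and the maximum is attained at some triple, whose triad therefore has the same value ($RED$). Finally, $IIP$ is where $\succeq^{2}$ parts from $\succeq^{K}$: passing from $\mathbf{T}=(t_1;t_2;t_3)$ to its transpose replaces the retained ratio by its reciprocal, whereas \eqref{eq2} is immune to this because it keeps the larger of a ratio and its reciprocal; hence it suffices to display one triad with $\mathbf{T}\not\sim^{2}\mathbf{T}^{\top}$, for instance $\mathbf{T}=(1;2;1)$, whose transpose is $(1;1/2;1)$.

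The step I expect to need the most care is $MON$. One must read ``a triad of the matrix'' so that the three indices keep their order --- equivalently, so that the per-triple quantity of the sub-triad genuinely appears among the terms of the maximum in \eqref{eq4}. Under an order-reversing relabelling the one-sided ratio flips to its reciprocal and may exceed the matrix's value, which would break $MON$; this is precisely the phenomenon that stays harmless for $\succeq^{K}$ because of the two-sided maximum, and pinning it down is the main obstacle in the argument.
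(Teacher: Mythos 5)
Your overall architecture---direct computation of the per-triple value for $PR$, $HTE$ and $SI$, the max-over-a-subset argument for $MON$ and $RED$, and a single concrete pair witnessing the failure of $IIP$---is exactly the paper's; the paper's version is merely terser, and its $IIP$ counterexample compares $(1;1;1)$ with $(1;2;1)$ rather than a triad with its own transpose. However, your verification of $PR$ rests on a reciprocal slip that is fatal under the definition as printed. The ratio retained in \eqref{eq4} is $a_{ij}a_{jk}/a_{ik}$, so on $\mathbf{S}=(1;s_2;1)$ its value is $a_{12}a_{23}/a_{13}=1/s_2$, not $s_2$; you have silently switched to the other ratio $a_{ik}/(a_{ij}a_{jk})$. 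Taking \eqref{eq4} literally, $\mathbf{S}\succeq^{2}\mathbf{T}\iff 1/s_2\le 1/t_2\iff s_2\ge t_2$, the reverse of what $PR$ demands---and the paper's own $IIP$ example exhibits precisely this tension, since it concludes $(1;1;1)\prec^{2}(1;2;1)$ whereas $PR$ requires $(1;1;1)\succeq(1;2;1)$. So either \eqref{eq4} must be read with the ratio $a_{ik}/(a_{ij}a_{jk})$, in which case your computation is correct but you should say explicitly that you are departing from the displayed formula and recheck the numbers in the $IIP$ step, or your $PR$ step fails. The discrepancy has to be confronted, not passed over.

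Second, the difficulty you flag for $MON$ is genuine, and you announce it without resolving it. Because a pairwise comparison submatrix is defined through an arbitrary injection $\sigma$, a triad of $\mathbf{A}$ built on an index set $\{p,q,r\}$ taken in a non-cyclic order has per-triple value equal to the \emph{reciprocal} of the order-preserving one, and that reciprocal need not occur among the terms of the maximum in \eqref{eq4}. Concretely, for the $4\times 4$ matrix with $a_{13}=a_{14}=a_{24}=3$ and all other upper-triangular entries equal to $1$, every ordered triple gives a value of at most $1$, yet the triad obtained via $\sigma(1)=2$, $\sigma(2)=1$, $\sigma(3)=3$ has entries $t_{12}=1$, $t_{13}=1$, $t_{23}=3$ and hence value $3$; it is therefore strictly more inconsistent than the whole matrix, and $MON$ fails. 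The paper's one-line justification (``the set of triads of a submatrix is a subset of the set of triads of the matrix'') tacitly assumes the per-triple value is labelling-independent, which holds for the two-sided maximum in \eqref{eq2} but not for the one-sided ratio in \eqref{eq4}. To make your argument (and the lemma) correct you must read ``triad'' as an order-preserving selection of indices; once that is stated, your $MON$ and $RED$ steps close exactly as you describe.
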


\begin{proof}
$PR$, $HTE$ and $SI$ consider only the upper triangle of a triad, thus they are satisfied.
Inconsistency ranking $\succeq^{2}$ is based on the maximally inconsistent triad, and the set of triads of a pairwise comparison submatrix is a subset of the set of triads of the original matrix, therefore it satisfies $MON$ and $RED$.

Take the following triads:
\[
\mathbf{A} =
\left[
\begin{array}{ccc}
    1     & 1     & 1     \\
    1     & 1     & 1     \\
    1     & 1     & 1     \\
\end{array}
\right]
\qquad
\text{and}
\qquad
\mathbf{B} =
\left[
\begin{array}{ccc}
    1     & 1     & 2     \\
    1     & 1     & 1     \\
     1/2  & 1     & 1     \\
\end{array}
\right].
\]
Now $1 > 1/2$ implies $\mathbf{A} \prec^{2} \mathbf{B}$, but $1 < 2$, so $\mathbf{A}^\top \succ^{2} \mathbf{B}^\top$, showing the violation of invariance under inversion of preferences.
\end{proof}

\begin{example} \label{Examp3}
Consider two pairwise comparison matrices $\mathbf{A} = \left[ a_{ij} \right] \in \mathbb{R}^{n \times n}_+$ and $\mathbf{B} = \left[ b_{ij} \right] \in \mathbb{R}^{m \times m}_+$. Then $\mathbf{A} \succeq^{3} \mathbf{B}$ if
\begin{equation} \label{eq5}
\max_{i<j<k} \left( \max \left\{ \frac{a_{jk}^2}{a_{ik}} ; \, \frac{a_{ik}}{a_{jk}^2} \right\} \right) \leq \max_{i<j<k} \left( \max \left\{ \frac{b_{jk}^2}{b_{ik}} ; \, \frac{b_{ik}}{b_{jk}^2} \right\} \right).
\end{equation}
\end{example}

\begin{lemma}
The inconsistency ranking $\succeq^{3}$ in Example~\ref{Examp3} meets $PR$, $IIP$, $SI$, $MON$ and $RED$, but fails $HTE$.
\end{lemma}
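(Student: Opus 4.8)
The plan is to check $PR$, $IIP$ and $SI$ by a direct computation on triads, to obtain $MON$ and $RED$ exactly as for Example~\ref{Examp2}, and to refute $HTE$ with an explicit pair of triads. The fact I would record at the outset is that, for a triad $\mathbf{T} = (t_1;\,t_2;\,t_3)$, the left-hand side of \eqref{eq5} equals $\max\{t_3^2/t_2;\, t_2/t_3^2\}$ — that is, the Koczkodaj triad value $\max\{t_1 t_3/t_2;\, t_2/(t_1 t_3)\}$ of \eqref{eq2} with $t_1$ replaced by $t_3$. In particular it does not depend on $t_1$, and it coincides with the Koczkodaj value exactly when $t_1 = t_3$.

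With this, $PR$, $IIP$ and $SI$ are one-line verifications. For $PR$, the triads $\mathbf{S} = (1;\,s_2;\,1)$ and $\mathbf{T} = (1;\,t_2;\,1)$ receive the values $\max\{1/s_2;\,s_2\} = s_2$ and $t_2$ (since $s_2, t_2 \geq 1$), hence $\mathbf{S} \succeq^{3} \mathbf{T} \iff s_2 \leq t_2$. For $IIP$, the transpose $\mathbf{T}^\top = (1/t_1;\,1/t_2;\,1/t_3)$ receives $\max\{(1/t_3)^2/(1/t_2);\,(1/t_2)/(1/t_3)^2\} = \max\{t_2/t_3^2;\,t_3^2/t_2\}$, the same value as $\mathbf{T}$ — as for the Koczkodaj ranking, taking the maximum of a quantity together with its reciprocal makes the index invariant under transposition. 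For $SI$, replacing $(t_1;\,t_2;\,t_3)$ by $(kt_1;\,k^2 t_2;\,kt_3)$ turns $t_3^2/t_2$ into $(kt_3)^2/(k^2 t_2) = t_3^2/t_2$, so the value is unaffected for every $k > 0$.

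For $MON$ and $RED$ I would repeat the argument used for Example~\ref{Examp2}: the value defining $\succeq^{3}$ on a matrix is the maximum, over its triads, of the value defining $\succeq^{3}$ on each triad, and the set of triads of a pairwise comparison submatrix is contained in the set of triads of the original matrix. Hence every triad $\mathbf{T}$ of $\mathbf{A}$ satisfies $\mathbf{A} \preceq^{3} \mathbf{T}$, which is $MON$, and a triad $\mathbf{T}$ attaining the maximum in \eqref{eq5} satisfies $\mathbf{A} \sim^{3} \mathbf{T}$, which is $RED$. To refute $HTE$ I would take $\mathbf{T} = (1;\,t_2;\,t_3)$ and $\mathbf{T'} = (1;\,t_2/t_3;\,1)$ with $t_3 \neq 1$: their values are $\max\{t_3^2/t_2;\,t_2/t_3^2\}$ and $\max\{t_3/t_2;\,t_2/t_3\}$, which differ in general — for instance $\mathbf{T} = (1;\,2;\,2)$ and $\mathbf{T'} = (1;\,1;\,1)$ give $2$ and $1$, so $\mathbf{T} \not\sim^{3} \mathbf{T'}$. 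I would probably display the two $3 \times 3$ matrices explicitly, in the style of the proof of the lemma for Example~\ref{Examp2}.

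None of this is genuinely hard; the computations are routine. The points that need care, rather than an obstacle, are two. First, the $HTE$ counterexample must use a triad with $t_3 \neq 1$, because for $t_3 = 1$ the $\succeq^{3}$-value and the Koczkodaj value coincide and $HTE$ is not violated there. Second, in the $MON$/$RED$ step one should note that the left-hand side of \eqref{eq5} — unlike the Koczkodaj expression in \eqref{eq2} — is not symmetric in the three entities of a triad, so "triad of $\mathbf{A}$" must be read as a submatrix that preserves the order of the chosen entities; this is already the reading implicit in the treatment of Example~\ref{Examp2}.
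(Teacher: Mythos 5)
Your proposal is correct and follows essentially the same route as the paper: the five positive verifications rest on the same observations (the $\succeq^{3}$ value of a triad depends only on $t_2$ and $t_3$ and is invariant under reciprocation and under the $SI$ rescaling, while $MON$ and $RED$ follow from the max-over-triads structure exactly as in Example~\ref{Examp2}), and your explicit remark that triads must be read as order-preserving submatrices matches the convention the paper uses implicitly. The only difference is cosmetic: for $HTE$ the paper exhibits a ranking reversal between two triads, $(1;\,3;\,2)$ versus $(1;\,5;\,4)$, whereas you refute the equivalence $\mathbf{T}\sim\mathbf{T'}$ directly with the single pair $(1;\,2;\,2)$ and $(1;\,1;\,1)$, which is equally valid and arguably more economical.
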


\begin{proof}
$PR$ is satisfied because of the analogous role of $a_{ik}$ in \eqref{eq2} and \eqref{eq5}.
$IIP$ and $SI$ is met due to the substitution of $a_{ij}$ with $a_{jk}$.
Inconsistency ranking $\succeq^{3}$ is based on the maximally inconsistent triad, and the set of triads of a pairwise comparison submatrix is a subset of the set of triads of the original matrix, therefore it satisfies $MON$ and $RED$.

Take the following triads:
\[
\mathbf{A} =
\left[
\begin{array}{ccc}
    1     & 1     & 3     \\
    1     & 1     & 2     \\
     1/3  &  1/2  & 1     \\
\end{array}
\right]
\qquad
\text{and}
\qquad
\mathbf{B} =
\left[
\begin{array}{ccc}
    1     & 1     & 5     \\
    1     & 1     & 4     \\
     1/5  &  1/4  & 1     \\
\end{array}
\right].
\]
Now $2^2 / 3 < 4^2 / 5$ implies $\mathbf{A} \succ^{3} \mathbf{B}$. A possible transformation of triads $\mathbf{A}$ and $\mathbf{B}$ according to $HTE$ leads to $\mathbf{A'} = (1; \,3/2; \,1)$ and $\mathbf{B'} = (1; \,5/4; \,1)$, so $\mathbf{A'} \prec^{3} \mathbf{B'}$ since $3/2 > 5/4$.
It reveals the violation of homogeneous treatment of entities by the inconsistency ranking $\succeq^{3}$.
\end{proof}

\begin{example} \label{Examp4}
Consider two pairwise comparison matrices $\mathbf{A} = \left[ a_{ij} \right] \in \mathbb{R}^{n \times n}_+$ and $\mathbf{B} = \left[ b_{ij} \right] \in \mathbb{R}^{m \times m}_+$. Then $\mathbf{A} \succeq^{4} \mathbf{B}$ if
\begin{equation} \label{eq6}
\max_{i<j<k} \left( \max \left\{ \frac{a_{jk}}{a_{ik}} ; \, \frac{a_{ik}}{a_{jk}} \right\} \right) \leq \max_{i<j<k} \left( \max \left\{ \frac{b_{jk}}{b_{ik}} ; \, \frac{b_{ik}}{b_{jk}} \right\} \right).
\end{equation}
\end{example}

\begin{lemma}
The inconsistency ranking $\succeq^{4}$ in Example~\ref{Examp4} meets $PR$, $IIP$, $HTE$, $MON$ and $RED$, but fails $SI$.
\end{lemma}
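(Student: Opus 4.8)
The plan is to identify the ``per-triad'' quantity that drives $\succeq^{4}$ and then argue exactly as in the three preceding lemmas. A triad contains only the ordered index triple $(1,2,3)$, so the left-hand side of \eqref{eq6} evaluated at a triad $\mathbf{T}=(t_1;\,t_2;\,t_3)$ equals $q(\mathbf{T}):=\max\{t_3/t_2;\,t_2/t_3\}$, which is what \eqref{eq2} becomes on triads once the product $t_1 t_3$ is replaced by $t_3$ alone. The five positive axioms then all follow by comparing $q$ with the corresponding Koczkodaj quantity, while $SI$ will be broken by a pair of consistent triads on which $q$ differs.

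First I would verify $PR$: for $\mathbf{S}=(1;\,s_2;\,1)$ and $\mathbf{T}=(1;\,t_2;\,1)$ with $s_2,t_2\geq 1$ one has $q(\mathbf{S})=s_2$ and $q(\mathbf{T})=t_2$, so $\mathbf{S}\succeq^{4}\mathbf{T}\iff s_2\leq t_2$; this is immediate from the analogous role of $a_{ik}$ in \eqref{eq6} and \eqref{eq2}. $IIP$ holds because \eqref{eq6}, like \eqref{eq2}, maximizes a ratio together with its reciprocal, hence a triad and its transpose receive equal values. $HTE$ is the step that deserves a second look, since it is where $\succeq^{4}$ departs from $\succeq^{3}$: because $q$ does not involve $t_1$ and $q(1;\,t_2;\,t_3)=\max\{t_3/t_2;\,t_2/t_3\}=q(1;\,t_2/t_3;\,1)$, we get $\mathbf{T}\sim^{4}\mathbf{T'}$, so $HTE$ is satisfied. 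For $MON$ and $RED$ I would reuse the argument already applied to $\succeq^{2}$ and $\succeq^{3}$ word for word: $\succeq^{4}$ is governed by its maximally inconsistent triad, the triads of a submatrix form a subset of the triads of the whole matrix, and a maximum over a subset cannot exceed the maximum over the whole set; hence $\mathbf{A}$ is at least as inconsistent as each of its triads ($MON$), and the triad attaining the maximum is $\succeq^{4}$-equivalent to $\mathbf{A}$ ($RED$).

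Finally I would refute $SI$. Since $q(kt_1;\,k^2 t_2;\,kt_3)=\max\{t_3/(kt_2);\,kt_2/t_3\}$ genuinely depends on $k$, the scaling in the definition of $SI$ does not preserve $q$, and a concrete witness is the pair of consistent triads $\mathbf{T}=(1;\,1;\,1)$ and $\mathbf{T'}=(2;\,4;\,2)$: they form an $SI$-pair with $k=2$, yet $q(\mathbf{T})=1\neq 2=q(\mathbf{T'})$, so $\mathbf{T}\succ^{4}\mathbf{T'}$ whereas $SI$ would require $\mathbf{T}\sim^{4}\mathbf{T'}$. None of this is hard; the only point that needs any care is to confirm that the two triads in the counterexample are indeed consistent and are indeed related by $t\mapsto(kt_1;\,k^2 t_2;\,kt_3)$, so that the example isolates the failure of $SI$ and implicates no other axiom.
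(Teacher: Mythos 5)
Your proof is correct and follows essentially the same route as the paper: the same reading of \eqref{eq6} on triads as $q(\mathbf{T})=\max\{t_3/t_2;\,t_2/t_3\}$, the same subset-of-triads argument for $MON$ and $RED$, and the identical counterexample $(1;\,1;\,1)$ versus $(2;\,4;\,2)$ for the failure of $SI$ (you even get the direction $\mathbf{T}\succ^{4}\mathbf{T'}$ right, where the paper's wording is slightly loose). Your explicit verification of $HTE$ via $q(1;\,t_2;\,t_3)=q(1;\,t_2/t_3;\,1)$ is a welcome elaboration of the paper's terser remark that only $a_{ij}$ is eliminated.
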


\begin{proof}
$PR$ is satisfied because of the analogous role of $a_{ik}$ in \eqref{eq2} and \eqref{eq6}.
$IIP$ and $HTE$ hold as only $a_{ij}$ is eliminated from the original definition.
Inconsistency ranking $\succeq^{4}$ is based on the maximally inconsistent triad, and the set of triads of a pairwise comparison submatrix is a subset of the set of triads of the original matrix, therefore it satisfies $MON$ and $RED$.

Take the following triads:
\[
\mathbf{A} =
\left[
\begin{array}{ccc}
    1     & 1     & 1     \\
    1     & 1     & 1     \\
    1     & 1     & 1     \\
\end{array}
\right]
\qquad
\text{and}
\qquad
\mathbf{A'} =
\left[
\begin{array}{ccc}
    1     & 2     & 4     \\
     1/2  & 1     & 2     \\
     1/4  &  1/2  & 1     \\
\end{array}
\right].
\]
Now $1/1 > 2/4$ implies $\mathbf{A} \prec^{4} \mathbf{A'}$, but $\mathbf{A'}$ can be obtained by a transformation of triad $\mathbf{A}$ according to $SI$, showing the violation of scale invariance.
\end{proof}

\begin{example} \label{Examp5}
Consider two pairwise comparison matrices $\mathbf{A} = \left[ a_{ij} \right] \in \mathbb{R}^{n \times n}_+$ and $\mathbf{B} = \left[ b_{ij} \right] \in \mathbb{R}^{m \times m}_+$. Then $\mathbf{A} \succeq^{5} \mathbf{B}$ if
\begin{equation} \label{eq7}
\min_{i<j<k} \left( \max \left\{ \frac{a_{ij} a_{jk}}{a_{ik}} ; \, \frac{a_{ik}}{a_{ij} a_{jk}} \right\} \right) \leq \min_{i<j<k} \left( \max \left\{ \frac{b_{ij} b_{jk}}{b_{ik}} ; \, \frac{b_{ik}}{b_{ij} b_{jk}} \right\} \right).
\end{equation}
\end{example}

\begin{lemma}
The inconsistency ranking $\succeq^{5}$ in Example~\ref{Examp5} meets $PR$, $IIP$, $HTE$, $SI$ and $RED$, but fails $MON$.
\end{lemma}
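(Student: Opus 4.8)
The plan is to follow the template already used for Examples~\ref{Examp1}--\ref{Examp5}. The axioms $PR$, $IIP$, $HTE$ and $SI$ all speak only about the ordering among triads, and on a single $3 \times 3$ matrix the minimum and the maximum in~\eqref{eq7} and~\eqref{eq2} are taken over the same one-element index set, so they coincide. Hence $\succeq^5$ and $\succeq^K$ agree on the set of triads, and these four properties are inherited from the verification already carried out for the Koczkodaj inconsistency ranking in the proof of Theorem~\ref{Theo1}.

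For $RED$, the point is that any pairwise comparison matrix $\mathbf{A}$ has only finitely many index triples $i<j<k$, so the minimum defining the $\succeq^5$-value of $\mathbf{A}$ in~\eqref{eq7} is attained, say by the triad $\mathbf{T} \subset \mathbf{A}$ associated with a minimizing triple (with $\mathbf{T} = \mathbf{A}$ when $\mathbf{A}$ is itself $3 \times 3$). Evaluating~\eqref{eq7} on the standalone triad $\mathbf{T}$ returns exactly that minimal value, so $\mathbf{A} \sim^5 \mathbf{T}$, which is precisely $RED$.

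The failure of $MON$ is where the behaviour of $\succeq^5$ genuinely departs from $\succeq^K$, and it is the part I would take the most care over. The idea is that taking the minimum over triads lets one make a matrix \emph{less} inconsistent by padding it with entities that form a consistent triad, so that the matrix ends up strictly better than one of its own triads. Concretely I would take a $4 \times 4$ pairwise comparison matrix $\mathbf{A}$ with $a_{ij} = 1$ for all $i,j$ except $a_{34} = 2$ and $a_{43} = 1/2$. The triads on $\{1,2,3\}$ and $\{1,2,4\}$ are consistent, so the minimum in~\eqref{eq7} for $\mathbf{A}$ equals $1$, whereas the triad $\mathbf{T} \subset \mathbf{A}$ on $\{1,3,4\}$ has $\succeq^5$-value $2$. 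Thus $\mathbf{A} \succ^5 \mathbf{T}$, contradicting the requirement $\mathbf{A} \preceq^5 \mathbf{T}$ imposed by $MON$; note that $n \geq 4$ is needed here, since for a triad $MON$ is vacuous.

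The only real pitfall is bookkeeping the orientation of $\succeq$: since ``$\succeq$'' denotes \emph{at most as inconsistent}, one has to check that the strict comparison coming from the example points against $MON$ rather than with it, and that the padding triad really is consistent so that the reported inconsistency of $\mathbf{A}$ drops all the way down to the consistent value $1$. Apart from that, every step is a routine specialization of arguments already present in the paper.
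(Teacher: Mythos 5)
Your proposal is correct and follows essentially the same route as the paper: agreement of $\succeq^{5}$ with the Koczkodaj ranking on triads gives $PR$, $IIP$, $HTE$ and $SI$; attainment of the minimum in \eqref{eq7} gives $RED$; and the min-over-triads structure breaks $MON$. The only difference is that the paper dismisses $MON$ with a one-line appeal to the subset relation between the triad sets, whereas you exhibit an explicit $4 \times 4$ counterexample (all entries $1$ except $a_{34}=2$, $a_{43}=1/2$) with the inequality checked strictly --- which is, if anything, the more careful way to establish the violation.
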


\begin{proof}
Inconsistency ranking $\succeq^{5}$ is equivalent to the Koczkodaj inconsistency ranking on the set of triads.
$PR$, $IIP$, $HTE$ and $SI$ consider only a triad, hence these properties are satisfied.
There exists a minimally inconsistent triad in any pairwise comparison matrix, so $RED$ is met.
The set of triads of a pairwise comparison submatrix is a subset of the set of triads of the original matrix, therefore inconsistency ranking $\succeq^{5}$ violates $MON$.
\end{proof}

\begin{example} \label{Examp6}
Consider two pairwise comparison matrices $\mathbf{A} = \left[ a_{ij} \right] \in \mathbb{R}^{n \times n}_+$ and $\mathbf{B} = \left[ b_{ij} \right] \in \mathbb{R}^{m \times m}_+$. Then $\mathbf{A} \succeq^{6} \mathbf{B}$ if
\begin{equation} \label{eq8}
\max_{i<j<k} \left( \max \left\{ \frac{a_{ij} a_{jk}}{a_{ik}} ; \, \frac{a_{ik}}{a_{ij} a_{jk}} \right\} \right)^n \leq \max_{i<j<k} \left( \max \left\{ \frac{b_{ij} b_{jk}}{b_{ik}} ; \, \frac{b_{ik}}{b_{ij} b_{jk}} \right\} \right)^m.
\end{equation}
\end{example}

\begin{lemma}
The inconsistency ranking $\succeq^{6}$ in Example~\ref{Examp6} meets $PR$, $IIP$, $HTE$, $SI$ and $MON$, but fails $RED$.
\end{lemma}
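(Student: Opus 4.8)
The plan is to use that the ranking $\succeq^{6}$ agrees with the Koczkodaj ranking $\succeq^K$ on triads and differs from it only in how it aggregates over matrices of different sizes. Throughout, write $K(\mathbf{A}) = \max_{i<j<k} \max\{ a_{ij}a_{jk}/a_{ik},\, a_{ik}/(a_{ij}a_{jk}) \}$ for the quantity appearing on the left-hand side of \eqref{eq2}; then $K(\mathbf{A}) \geq 1$ for every $\mathbf{A}$, with equality iff $\mathbf{A}$ is consistent, and $K(\mathbf{B}) \leq K(\mathbf{A})$ whenever $\mathbf{B} \subset \mathbf{A}$, since the triads of $\mathbf{B}$ form a subfamily of those of $\mathbf{A}$. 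If $n,m$ denote the sizes of $\mathbf{A},\mathbf{B}$, then $\mathbf{A} \succeq^{6} \mathbf{B}$ reads $K(\mathbf{A})^{n} \leq K(\mathbf{B})^{m}$; since this is the order induced by a real-valued map on $\mathcal{A}$ (sending a size-$n$ matrix $\mathbf{A}$ to $K(\mathbf{A})^{n}$), $\succeq^{6}$ is an inconsistency ranking.

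For $PR$, $IIP$, $HTE$ and $SI$: each of these four axioms is a statement about $3 \times 3$ matrices only, and on triads one has $n = m = 3$. Since $x \mapsto x^{3}$ is strictly increasing for $x > 0$, $K(\mathbf{A})^{3} \leq K(\mathbf{B})^{3} \iff K(\mathbf{A}) \leq K(\mathbf{B})$, so $\succeq^{6}$ restricted to triads coincides with $\succeq^K$ restricted to triads. As the Koczkodaj ranking satisfies $PR$, $IIP$, $HTE$ and $SI$ (proof of Theorem~\ref{Theo1}), so does $\succeq^{6}$.

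For $MON$: let $\mathbf{A}$ have size $n$ and let $\mathbf{T}$ be a triad of $\mathbf{A}$, so $n \geq 3$. Then $1 \leq K(\mathbf{T}) \leq K(\mathbf{A})$, and hence $K(\mathbf{T})^{3} \leq K(\mathbf{T})^{n} \leq K(\mathbf{A})^{n}$ --- the first inequality because $n \geq 3$ and $K(\mathbf{T}) \geq 1$, the second because $K(\mathbf{T}) \leq K(\mathbf{A})$. This is precisely $\mathbf{T} \succeq^{6} \mathbf{A}$, i.e. $\mathbf{A} \preceq^{6} \mathbf{T}$, so $MON$ holds.

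For the failure of $RED$: I would take any inconsistent $4 \times 4$ matrix $\mathbf{A}$ --- for instance the reciprocal matrix with $a_{34} = 2$ and every other off-diagonal entry equal to $1$, for which $K(\mathbf{A}) = 2 > 1$. Every triad $\mathbf{T} \subset \mathbf{A}$ satisfies $1 \leq K(\mathbf{T}) \leq K(\mathbf{A})$, so $\mathbf{A} \sim^{6} \mathbf{T}$ would require $K(\mathbf{A})^{4} = K(\mathbf{T})^{3}$, i.e. $K(\mathbf{T}) = K(\mathbf{A})^{4/3} > K(\mathbf{A})$, the strictness coming from $K(\mathbf{A}) > 1$; this contradicts $K(\mathbf{T}) \leq K(\mathbf{A})$. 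Hence no triad of $\mathbf{A}$ lies in the same $\sim^{6}$-class as $\mathbf{A}$, and $RED$ fails. The one point that needs care here is that the counterexample must be genuinely inconsistent: for a consistent $4 \times 4$ matrix every triad would be $\sim^{6}$-equivalent to it, both values being $1$. Once inconsistency is secured, the contradiction is driven entirely by the elementary fact that raising a real number $> 1$ to the power $4/3$ strictly increases it, and everything else reduces to routine monotonicity of $x \mapsto x^{n}$ (using only $n \geq 3$ and $K(\cdot) \geq 1$).
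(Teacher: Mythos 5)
Your proof is correct and follows essentially the same route as the paper: coincidence with the Koczkodaj ranking on triads gives $PR$, $IIP$, $HTE$ and $SI$; the bound $K(\cdot)\geq 1$ together with $n\geq 3$ gives $MON$; and the size-dependence of the exponent kills $RED$. The only difference is that where the paper merely asserts that $RED$ fails ``because of the appearance of $n$ and $m$,'' you supply and verify an explicit inconsistent $4\times 4$ counterexample, correctly noting that inconsistency of the matrix is needed (a consistent matrix would not witness the failure) --- a welcome completion of a step the paper leaves implicit.
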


\begin{proof}
Inconsistency ranking $\succeq^{6}$ is equivalent to the Koczkodaj inconsistency ranking when $n=m$, for example, on the set of triads.
Thus $PR$, $IIP$, $HTE$ and $SI$ are all satisfied.
Since $\max \{ a_{ij} a_{jk} / a_{ik} ; \, a_{ik} / (a_{ij} a_{jk}) \} \geq 1$, inconsistency ranking $\succeq^{6}$ meets $MON$.
However, it violates $RED$ because of the appearance of $n$ and $m$ in \eqref{eq8}.
\end{proof}

Table~\ref{Table1} summarizes the above discussion on the independence of the axioms.

\begin{table}[htbp]
\centering
\caption{Axiomatic properties of inconsistency rankings}
\label{Table1}
\noindent\makebox[\textwidth]{
	\begin{tabularx}{1\textwidth}{ll CCCCCC} \toprule
   \parbox[m]{2.5cm}{Inconsistency \\ ranking} & Definition & $PR$ & $IIP$ & $HTA$ & $SI$ & $MON$ & $RED$ \\ \midrule
    Koczkodaj & Definition~\ref{Def7} & \textcolor{PineGreen}{\ding{52}} & \textcolor{PineGreen}{\ding{52}} & \textcolor{PineGreen}{\ding{52}} & \textcolor{PineGreen}{\ding{52}} & \textcolor{PineGreen}{\ding{52}} & \textcolor{PineGreen}{\ding{52}} \\ \midrule
    $\succeq^{1}$ & Example~\ref{Examp1} & \textcolor{BrickRed}{\ding{55}} & \textcolor{PineGreen}{\ding{52}} & \textcolor{PineGreen}{\ding{52}} & \textcolor{PineGreen}{\ding{52}} & \textcolor{PineGreen}{\ding{52}} & \textcolor{PineGreen}{\ding{52}} \\
    $\succeq^{2}$ & Example~\ref{Examp2} & \textcolor{PineGreen}{\ding{52}} & \textcolor{BrickRed}{\ding{55}} & \textcolor{PineGreen}{\ding{52}} & \textcolor{PineGreen}{\ding{52}} & \textcolor{PineGreen}{\ding{52}} & \textcolor{PineGreen}{\ding{52}} \\
    $\succeq^{3}$ & Example~\ref{Examp3} & \textcolor{PineGreen}{\ding{52}} & \textcolor{PineGreen}{\ding{52}} & \textcolor{BrickRed}{\ding{55}} & \textcolor{PineGreen}{\ding{52}} & \textcolor{PineGreen}{\ding{52}} & \textcolor{PineGreen}{\ding{52}} \\
    $\succeq^{4}$ & Example~\ref{Examp4} & \textcolor{PineGreen}{\ding{52}} & \textcolor{PineGreen}{\ding{52}} & \textcolor{PineGreen}{\ding{52}} & \textcolor{BrickRed}{\ding{55}} & \textcolor{PineGreen}{\ding{52}} & \textcolor{PineGreen}{\ding{52}} \\
    $\succeq^{5}$ & Example~\ref{Examp5} & \textcolor{PineGreen}{\ding{52}} & \textcolor{PineGreen}{\ding{52}} & \textcolor{PineGreen}{\ding{52}} & \textcolor{PineGreen}{\ding{52}} & \textcolor{BrickRed}{\ding{55}} & \textcolor{PineGreen}{\ding{52}} \\
    $\succeq^{6}$ & Example~\ref{Examp6} & \textcolor{PineGreen}{\ding{52}} & \textcolor{PineGreen}{\ding{52}} & \textcolor{PineGreen}{\ding{52}} & \textcolor{PineGreen}{\ding{52}} & \textcolor{PineGreen}{\ding{52}} & \textcolor{BrickRed}{\ding{55}} \\ \bottomrule
    \end{tabularx} }
\end{table}

Note that inconsistency rankings $\succeq^{1}$ and $\succeq^{5}$ differ only in the direction of the inequality.
Analogously to Theorem~\ref{Theo1}, $\succeq^{1}$ is characterized by $IIP$, $HTE$, $SI$, $MON$, $RED$ (the five axioms it satisfies) and negative responsiveness, the inverse of positive responsiveness defined as $\mathbf{S} \succeq \mathbf{T}$ if and only if $t_2 \geq s_2$.
In a similar way, $\succeq^{5}$ is the unique inconsistency ranking satisfying $PR$, $IIP$, $HTE$, $SI$, $RED$ and reversed monotonicity, that is, $\mathbf{A} \succeq \mathbf{B}$ for any pairwise comparison matrix $\mathbf{A}$ and its triad $\mathbf{B}$.

Inconsistency rankings $\succeq^{1}$--$\succeq^{5}$ have not much practical use. On the other hand, inconsistency ranking $\succeq^{6}$ may be meaningful if inconsistency is assumed to (deterministically) increase with size due to the possible presence of other inconsistent triads.

\section{Discussion} \label{Sec6}

The paper has provided an axiomatic characterization of the inconsistency ranking induced by the Koczkodaj inconsistency index. The theorem requires six properties, some of them directly borrowed from previous axiomatic discussions ($IIP$ from \citet{Brunelli2017} and $MON$ from \citet{KoczkodajSzybowski2015}), some of them similar to the axioms used in the literature ($PR$ is a relaxed version of monotonicity on single comparisons by \citet{BrunelliFedrizzi2015} and $RED$ is implicitly contained in monotonicity by \citet{KoczkodajSzybowski2015}), and some of them introduced here ($HTE$ and $SI$). Although we have given a few arguments for homogeneous treatment of entities and scale invariance, their justification can be debated.
However, it does not influence the essence of our characterization, which only states that the Koczkodaj inconsistency ranking should be accepted when \emph{all} properties are adopted.

It remains to be seen how this characterization can be modified. The proof of Theorem~\ref{Theo1} shows that $IIP$ can be eliminated if $PR$ is defined without the condition $s_2, t_2 \geq 1$. Although it seems to be a simplification (we will have only five axioms instead of six), the current form better reflects the role of different properties, highlighted by the discussion of logical independence in Section~\ref{Sec5}.

Another direction can be the use of other known properties in the characterization. A natural candidate is the following one, suggested by \citet{BrunelliFedrizzi2015} and applied by \citet{KoczkodajSzybowski2015}, too.

\begin{definition}
\emph{Order invariance} ($OI$):
Consider a pairwise comparison matrix $\mathbf{A}$ and a permutation matrix $\mathbf{P}$ of the same size. Inconsistency ranking $\succeq$ satisfies $OI$ if $\mathbf{PAP^\top} \sim \mathbf{A}$.
\end{definition}

$OI$ says that changing the order of the entities does not affect the inconsistency of preferences.
In our characterization, homogeneous treatment of entities and scale invariance have a somewhat analogous role, but they cannot be substituted immediately by $OI$.

To summarize, the current study clearly does not mean the end of the discussion on inconsistency indices. 
The main aim of this topic is perhaps the identification of the best universal measure, or at least, the perfect in a certain field. For this purpose, a thorough understanding of inconsistency indicators is indispensable. We hope the current study may give new insights by presenting the first axiomatic characterization in this field for the ranking derived from the Koczkodaj inconsistency index.

\section*{Acknowledgements}
\addcontentsline{toc}{section}{Acknowledgements}
\noindent
I would like to thank to Matteo Brunelli, Michele Fedrizzi, Waldemar W. Koczkodaj and Jacek Szybowski for inspiration.
I am also grateful to Matteo Brunelli, S\'andor Boz\'oki and Mikl\'os Pint\'er for reading the manuscript and for useful advices. \\
Three anonymous reviewers provided valuable comments and suggestions on earlier drafts. \\
The research was supported by OTKA grant K 111797 and by the MTA Premium Post Doctorate Research Program. \\
This research was partially supported by Pallas Athene Domus Scientiae Foundation. The views expressed are those of the author's and do not necessarily reflect the official opinion of Pallas Athene Domus Scientiae Foundation.


\end{document}